\documentclass[final]{article}

\usepackage{aaai}
\usepackage{times}


\usepackage{subfig}
\usepackage{amsthm}
\usepackage{amssymb}
\usepackage{amsmath}
\usepackage{mathtools}
\usepackage{mathrsfs}
\usepackage{xspace} 
\usepackage[shortlabels]{enumitem}
\usepackage{url}
\usepackage{booktabs}
\usepackage{graphicx}
\usepackage{algorithm}
\usepackage{algorithmicx}
\usepackage[noend]{algpseudocode}
\usepackage{tikz}
\usepackage{multirow}
\usepackage{nicefrac}
\usepackage{macros}
\usepackage{tikz}
\usetikzlibrary{arrows,calc}

\title{Dynamic Bayesian Ontology Languages}

\author{
\.Ismail \.Ilkan Ceylan\\
	    Theoretical Computer Science \\ 
	    TU Dresden, Germany \\
		\url{ceylan@tcs.inf.tu-dresden.de}   
	\And 
		Rafael Pe\~naloza\\
		KRDB Research Centre \\ 
		Free University of Bozen-Bolzano, Italy \\
		\url{rafael.penaloza@unibz.it}   
}


\newcommand{\ax}[1]{\ensuremath{\left<#1\right>}\xspace}
\newcommand{\bl}{\ensuremath{\mathcal{B{\kern-.02em}L}}\xspace}
\newcommand{\dbl}{\ensuremath{\mathcal{DB{\kern-.02em}L}}\xspace}
\newcommand{\bel}{\ensuremath{\mathcal{BE{\kern-.02em}L}}\xspace}

\newcommand{\cn}[1]{\scalebox{.89}[1]{\ensuremath{\mathtt{#1}}}}

\theoremstyle{plain}
\newtheorem{theorem}{Theorem}
\newtheorem{lemma}[theorem]{Lemma}
\newtheorem{proposition}[theorem]{Proposition}

\theoremstyle{definition}
\newtheorem{definition}[theorem]{Definition}
\newtheorem{example}[theorem]{Example}

\newcommand{\Ibb}{\ensuremath{\mathbb{I}}\xspace}

\DeclareMathOperator*{\no}{\sim}

\begin{document}

\maketitle

\begin{abstract}
Many formalisms combining ontology languages with uncertainty, usually in the form
of probabilities, have been studied over the years. Most of these formalisms, however,
assume that the probabilistic structure of the knowledge remains static over time.
We present a general approach for extending ontology languages to handle time-evolving
uncertainty represented by a dynamic Bayesian network. We show how reasoning in 
the original language and dynamic Bayesian inferences can be exploited for effective
reasoning in our framework. 
\end{abstract}

\section{Introduction}
Description Logics (DLs)~\cite{BCM+-07} are a well-known family of knowledge representation 
formalisms that have been successfully employed for encoding the knowledge of many application
domains. In DLs, knowledge is represented through a finite set of axioms, usually called
an \emph{ontology} or knowledge base. In essence, these axioms are atomic pieces of knowledge 
that provide explicit information of the domain. When mixed together in an ontology, these axioms 
may imply some additional knowledge that is not explicitly encoded. Reasoning is the act of 
making this implicit knowledge explicit through an entailment relation.

Some of the largest and best-maintained DL ontologies represent knowledge from the bio-medical
domains. For instance, the NCBO Bioportal%
\footnote{\url{http://bioportal.bioontology.org/}}
contains 420 ontologies of various sizes.
In the bio-medical fields it is very common to have only uncertain knowledge. The certainty that 
an expert has on an atomic piece of knowledge may have arisen from a statistical test,
or from possibly imprecise measurements, for example.
It thus becomes relevant to extend DLs to represent and reason with uncertainty.

The need for probabilistic extensions of DLs has been observed for over two decades already.
To cover it, many different formalisms have been introduced~\cite{Jaeg-KR'94,LuSt'08,LuSc-10,KlPa-URSW'11}.
The differences in these logics range from the underlying classical DL used, to the semantics, to
the assumptions made on the probabilistic component.
One of the main issues that these logics need to handle is the representation of joint probabilities, in
particular when the different axioms are not required to be probabilistically independent.
A recent approach solves this issue by dividing the ontology into \emph{contexts}, which intuitively
represent axioms that must appear together. The probabilistic knowledge is expressed through a 
Bayesian network that encodes the joint probability distribution of these contexts.
Although originally developed as an extension of the DL \el~\cite{CePe-IJCAR'14}, 
the framework has been extended to arbitrary
ontology languages with a monotone entailment relation~\cite{CePe-DL'14}.

One common feature of the probabilistic extensions of DLs existing in the literature is that they
consider the probability distribution to be static. For many applications, this assumption does not hold:
the probability of a person to have gray hair increases as time passes, as does the probability of a 
computer component to fail.
To the best of our knowledge, there is so far no extension of DLs that can handle evolving probabilities
effectively.

In this paper, we describe a general approach for extending ontology languages to handle evolving
probabilities. By extension, our method covers all DLs, but is not limited to them. 
The main idea is to adapt the formalism from~\cite{CePe-DL'14} to use \emph{dynamic} Bayesian
networks (DBNs)~\cite{Mu'02} as the underlying uncertainty structure 
to compactly encode evolving probability distributions.

Given an arbitrary ontology language \Lmc, we define its dynamic Bayesian extension \dbl. We show that
reasoning in \dbl can be seamlessly divided into the probabilistic computation over the DBN, and the logical
component with its underlying entailment relation. In order to reduce the number of entailment checks, 
we compile a so-called context formula, which encodes all contexts in which a given consequence holds.

Related to our work are relational BNs~\cite{Ja-UAI'97} and their extensions. In contrast to relational BNs, 
we provide a tight coupling between the logical formalism and the DBN, which allows us to describe evolving 
probabilities while keeping the intuitive representations of each individual component.
Additionally, restricting the logical formalism to specific ontology languages provides an opportunity for finding 
effective reasoning algorithms.

%

\section{Bayesian Ontology Languages}

To remain as general as possible, we do not fix a specific logic, but consider an arbitrary
\emph{ontology language} \Lmc consisting of two infinite sets \Amf and \Cmf of \emph{axioms} and 
\emph{consequences}, respectively, and a class $\Omf\subseteq\wp_{\sf fin}(\Amf)$ of finite sets
of axioms, called \emph{ontologies}, such that if $\Omc\in\Omf$, then $\Omc'\in\Omf$ for all 
$\Omc'\subseteq\Omc$.
The language \Lmc is associated to a class \Imf of \emph{interpretations} and 
an \emph{entailment relation}
${\models}\subseteq\Imf\times(\Amf\cup\Cmf)$.
An interpretation $\Imc\in\Imf$ is a \emph{model} of the ontology \Omc ($\Imc\models\Omc$) if
$\Imc\models\alpha$ for all $\alpha\in\Omc$.
\Omc \emph{entails} $c\in\Cmf$ ($\Omc\models c$) if every model
of \Omc entails $c$.
Notice that the entailment relation is monotonic; i.e., if $\Omc\models c$ and 
$\Omc\subseteq\Omc'\in\Omf$, then ${\Omc'\models c}$.
Any standard description logic (DL)~\cite{BCM+-07} is an ontology language of this kind; consequences
in these languages are e.g.\ concept unsatisfiability, concept subsumption, or query entailment.
However, many other ontology languages of varying expressivity and complexity properties exist.
For the rest of this paper, \Lmc is an arbitrary but fixed ontology language, with axioms \Amf, 
ontologies \Omf, consequences \Cmf, and interpretations \Imf.

As an example language we use the DL \el~\cite{BaBrLu-IJCAI'05}, which we briefly introduce here. 
Given two disjoint sets \NC and \NR, \el \emph{concepts} are built by the grammar rule
$C::=A\mid \top\mid C\sqcap C\mid \exists r.C$ where $A\in\NC$ and $r\in\NR$. \el \emph{axioms} 
and \emph{consequences} are expressions of the form $C\sqsubseteq D$, where $C$ and $D$ are concepts.
An \emph{interpretation} is a pair $(\Delta^\Imc,\cdot^\Imc)$ where $\Delta^\Imc$ is a non-empty set
and $\cdot^\Imc$ maps every $A\in\NC$ to $A^\Imc\subseteq\Delta^\Imc$ and every $r\in\NR$ to
$r^\Imc\subseteq\Delta^\Imc\times\Delta^\Imc$. This function is extended to concepts
by $\top^\Imc:=\Delta^\Imc$, ${(C\sqcap D)^\Imc:=C^\Imc\cap D^\Imc}$, and 
${(\exists r.C)^\Imc:=\{d\mid\exists e\in C^\Imc.(d,e)\in r^\Imc\}}$.
This in\-ter\-pre\-ta\-tion \emph{entails} the axiom (or consequence) $C\sqsubseteq D$ iff
$C^\Imc\subseteq D^\Imc$.

%
The Bayesian ontology language \bl extends \Lmc by associating each axiom in an ontology with a 
context, which intuitively describes the situation in which the axiom is required to hold. 
The knowledge of which context applies is uncertain, and expressed through a Bayesian 
network~\cite{CePe-DL'14}.

Briefly, a \emph{Bayesian network}~\cite{Darw-09} is a pair 
$\Bmc=(G,\Phi)$, where $G=(V,E)$ is a finite directed acyclic graph~(DAG) whose nodes represent 
Boolean random variables, 
and $\Phi$ contains, for every $x\in V$, a conditional probability distribution ${P_\Bmc(x\mid\pi(x))}$ of $x$ given 
its parents $\pi(x)$. 
Every variable $x\in V$ is conditionally independent of its non-descendants given 
its parents. Thus, the BN \Bmc defines a unique joint probability distribution (JPD) over $V$:
\[
P_{\Bmc}(V)=\prod_{x\in V}P_\Bmc(x\mid\pi(x)).
\]

Let $V$ be a finite set of variables.
A  \emph{$V$-context} is a consistent set of literals over $V$.
A \emph{$V$-axiom} is an expression of the form $\ax{\alpha:\kappa}$ where  $\alpha\in\Amf$ is an axiom and 
$\kappa$ is a $V$\mbox{-}context.
A \emph{$V$-ontology} is a finite set \Osf of $V$-axioms, such that 
$\{\alpha\mid\ax{\alpha:\kappa}\in\Osf\}\in\Omf$.
%
%
A \emph{\bl knowledge base} (KB) over $V$ is a pair ${\Kmc=(\Osf,\Bmc)}$ where $\Bmc$ is a BN over $V$ and
\Osf is a $V$-ontology.%

We briefly illustrate these notions over the language \bel, an extension of the DL \el, in the following Example.

\begin{example}
\label{exa:run}
Consider the \bel KB $\Kmc_1=(\Bmc_1,\Osf_1)$ where
\begin{align*}
\Osf_1 {=} \{ & \ax{\cn{Comp}\sqsubseteq\exists\cn{use}.\cn{Mem}\sqcap\exists\cn{use}.\cn{CPU}:\emptyset}, \\
			& \ax{\exists\cn{use}.\cn{FailMem}\sqsubseteq\cn{FailComp}:\{x\}}, \\
			& \ax{\exists\cn{use}.\cn{FailCPU}\sqsubseteq\cn{FailComp}:\{x\}}, \\
			& \ax{\exists\cn{use}.\cn{FailMem}\sqcap\exists\cn{use}.\cn{FailCPU}\sqsubseteq\cn{FailComp}{:}\{\neg x\}}, \\
			& \ax{\cn{Mem}\sqsubseteq\cn{FailMem}:\{y\}}, 
			  \ax{\cn{CPU}\sqsubseteq\cn{FailCPU}:\{z\}}
			\},
\end{align*}
and $\Bmc_1$ is the BN shown in Figure~\ref{fig:bn}. 
\begin{figure}[t]
\centering
\includegraphics[width=\columnwidth]{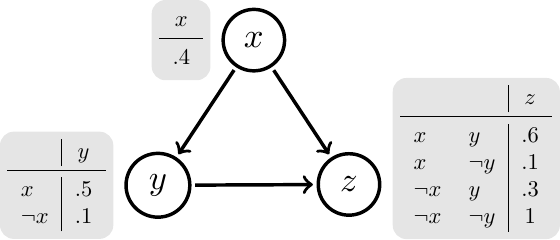}
\caption{The BN $\Bmc_1$ over the variables $V=\{x,y,z\}$}
\label{fig:bn}
\end{figure}
\end{example}
This KB represents a computer failure scenario, where $x$ stands for a critical situation, $y$ represents the 
memory failing, and $z$ the CPU failing.

The \emph{contextual semantics} is defined by extending interpretations to evaluate also the variables from $V$.
A \emph{$V$\mbox{-}interpretation} is a pair $\Isf=(\Imc,\Vmc^\Isf)$ where ${\Imc\in\Imf}$ and 
$\Vmc^\Isf$ is a propositional interpretation over the variables $V$.
The $V$-interpretation $\Isf=(\Imc,\Vmc^\Isf)$ is a \emph{model} of \ax{\alpha:\kappa} 
($\Isf\models \ax{\alpha:\kappa}$), where $\alpha\in\Amf$, iff 
$(\Vmc^\Isf \not\models_p \kappa)$ {\footnote{We use $\models_p$ to distinguish propositional entailment from $\models$.}} 
 or $(\Imc\models\alpha)$.%

It is a \emph{model} of the $V$-ontology \Osf iff it is a model of all the $V$-axioms in \Osf.
It \emph{entails} $c\in\Cmf$ if $\Imc\models c$.
The intuition behind this semantics is that an axiom is evaluated to true by all models provided 
it is in the right context.

Given a $V$-ontology \Osf, every propositional interpretation, or \emph{world}, \Wmc on $V$  defines an ontology 
%
$
{\Osf_\Wmc := \{\alpha\mid \ax{\alpha:\kappa}\in \Osf, \Wmc \models_p \kappa\}.}
$
Consider the KB $\Kmc_1$ provided in Example~\ref{exa:run}: The world ${\Wmc=\{x,\neg y, z\}}$ defines the ontology
\begin{align*}
\Osf_\Wmc {=} \{ &  \cn{Comp}\sqsubseteq\exists\cn{use}.\cn{Mem}\sqcap\exists\cn{use}.\cn{CPU}, \\
			& \exists\cn{use}.\cn{FailMem}\sqsubseteq\cn{FailComp}, \\
			& \exists\cn{use}.\cn{FailCPU}\sqsubseteq\cn{FailComp}, 
			 \cn{CPU}\sqsubseteq\cn{FailCPU}
			\}.
\end{align*}

Intuitively, a 
contextual ontology
is a compact representation of exponentially many ontologies from \Lmc; one for 
each
world \Wmc.
The uncertainty in \bl is expressed by the BN \Bmc, which is interpreted using multiple world semantics.

\begin{definition}[probabilistic interpretation]
A \emph{probabilistic interpretation} is a pair $\Pmc=(\Ibb,P_\Ibb)$, where \Ibb is a set of 
$V$\mbox{-}interpretations and $P_\Ibb$ is a probability distribution over \Ibb such that $P_\Ibb(\Isf)>0$ only 
for finitely many interpretations $\Isf\in\Ibb$.
It is a \emph{model} of the $V$-ontology \Osf if every $\Isf\in\Ibb$ is a model of \Osf.
\Pmc is \emph{consistent} with the BN \Bmc if for every valuation \Wmc of the variables in $V$ it holds that 
\[
\sum_{\mathclap{\Isf\in\Ibb,\, \Vmc^\Isf=\Wmc}}\; P_\Ibb(\Isf)=P_\Bmc(\Wmc).
\]
The probabilistic interpretation \Pmc is a \emph{model} of the KB $(\Bmc,\Osf)$ iff it is a model of 
\Osf and consistent with \Bmc.
\end{definition}
The fundamental reasoning task in \bl, probabilistic entailment, consists in finding the probability of observing
a consequence $c$; that is, the probability of being at a context where $c$ holds.

\begin{definition}[probabilistic entailment]
Let $c\in\Cmf$, and \Kmc a \bl KB. 
%
The \emph{probability of $c$} w.r.t.\ the probabilistic interpretation 
$\Pmc=(\Ibb,P_\Ibb)$ is
$P_\Pmc(c):=\sum_{(\Imc,\Wmc)\in\Ibb,\Imc\models c}P_\Ibb(\Imc,\Wmc).$
The \emph{probability of $c$ w.r.t.\ \Kmc} is 
${P_\Kmc(c):= \inf_{\Pmc\models\Kmc}P_\Pmc(c).}$
\end{definition}

It has been shown that to compute the conditional probability of a consequence $c$, 
it suffices to test, for each world \Wmc, whether $\Osf_\Wmc$ entails $c$~\cite{CePe-IJCAR'14}.
%
\begin{proposition}
\label{prop:basic}
Let $\Kmc=(\Bmc,\Osf)$ be a \bl KB and $c\in\Cmf$. Then
$P_\Kmc(c)=\sum_{{\Osf_\Wmc\models c}}P_\Bmc(\Wmc)$.
\end{proposition}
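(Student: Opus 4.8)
The plan is to establish the two inequalities $P_\Kmc(c) \ge \sum_{\Osf_\Wmc \models c} P_\Bmc(\Wmc)$ and $P_\Kmc(c) \le \sum_{\Osf_\Wmc \models c} P_\Bmc(\Wmc)$ separately, using as the central observation the fact that a $V$-interpretation $\Isf=(\Imc,\Vmc^\Isf)$ is a model of \Osf if and only if $\Imc$ is a model of the induced classical ontology $\Osf_{\Vmc^\Isf}$. This equivalence follows directly from the contextual semantics: $\Isf\models\ax{\alpha:\kappa}$ forces $\Imc\models\alpha$ exactly for those axioms whose context is propositionally satisfied by $\Vmc^\Isf$, and these are precisely the members of $\Osf_{\Vmc^\Isf}$. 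Throughout I assume \Kmc is consistent, i.e.\ $\Osf_\Wmc$ has a model in \Imf whenever $P_\Bmc(\Wmc)>0$; otherwise no probabilistic interpretation can be both a model of \Osf and consistent with \Bmc, and the infimum ranges over the empty set.

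For the lower bound I would take an arbitrary model $\Pmc=(\Ibb,P_\Ibb)$ of \Kmc and partition the defining sum of $P_\Pmc(c)$ according to the world $\Vmc^\Isf$. For each world \Wmc with $\Osf_\Wmc\models c$, every $\Isf=(\Imc,\Wmc)\in\Ibb$ is a model of \Osf, so $\Imc\models\Osf_\Wmc$ by the central observation and hence $\Imc\models c$; thus the inner sum restricted to those \Isf with $\Imc\models c$ coincides with the full inner sum over all \Isf carrying valuation \Wmc, which equals $P_\Bmc(\Wmc)$ by consistency with \Bmc. The remaining worlds contribute a nonnegative amount, so $P_\Pmc(c)\ge\sum_{\Osf_\Wmc\models c}P_\Bmc(\Wmc)$, and taking the infimum over all models \Pmc yields the first inequality.

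For the matching upper bound I would exhibit a single model of \Kmc attaining this value. For each world \Wmc with $P_\Bmc(\Wmc)>0$, choose a classical model $\Imc_\Wmc$ of $\Osf_\Wmc$, additionally requiring $\Imc_\Wmc\not\models c$ whenever $\Osf_\Wmc\not\models c$ (such a model exists by the very definition of non-entailment). Setting $\Ibb^\ast=\{(\Imc_\Wmc,\Wmc)\mid P_\Bmc(\Wmc)>0\}$ with $P_{\Ibb^\ast}(\Imc_\Wmc,\Wmc)=P_\Bmc(\Wmc)$, the central observation shows each $(\Imc_\Wmc,\Wmc)$ is a model of \Osf, and since exactly one interpretation carries each positive-probability world, the marginal condition $\sum_{\Vmc^\Isf=\Wmc}P_{\Ibb^\ast}(\Isf)=P_\Bmc(\Wmc)$ holds for every \Wmc; hence $\Pmc^\ast=(\Ibb^\ast,P_{\Ibb^\ast})$ is a model of \Kmc. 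By construction $\Imc_\Wmc\models c$ holds iff $\Osf_\Wmc\models c$, so $P_{\Pmc^\ast}(c)=\sum_{\Osf_\Wmc\models c}P_\Bmc(\Wmc)$, and therefore $P_\Kmc(c)$ is at most this quantity.

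I expect the main difficulty to lie not in either inequality in isolation but in the consistency bookkeeping surrounding the construction of $\Pmc^\ast$: I must verify both clauses of modelhood simultaneously — global satisfaction of every $V$-axiom and exact agreement of the marginal masses with $P_\Bmc$ — and I must carefully isolate the consistency hypothesis on \Kmc so that the minimizing model genuinely exists and the infimum defining $P_\Kmc(c)$ is not taken over an empty set.
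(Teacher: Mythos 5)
Your proof is correct and follows essentially the same route the paper relies on: the lower bound by marginalizing an arbitrary model over worlds, and the upper bound via a witness model containing exactly one classical interpretation per positive-probability world, which is precisely the ``pithy interpretation'' construction of the cited IJCAR'14 paper from which this proposition is imported without proof. Your explicit consistency caveat---that the identity can fail if some $\Osf_\Wmc$ with $P_\Bmc(\Wmc)>0$ has no model, since then $\Osf_\Wmc\models c$ holds vacuously while the infimum defining $P_\Kmc(c)$ ranges over an empty set---is a legitimate refinement that the paper glosses over.
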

This means that reasoning in \bl can be reduced to exponentially many entailment tests in the classical language 
\Lmc. 
For some logics, this exponential enumeration of worlds can be avoided~\cite{CePe-JELIA'14}. However,
this depends on the properties of the ontological language and its entailment relation, and cannot be guaranteed
for arbitrary languages.

Another relevant problem is to compute the probability of a consequence given some partial information about
the context. Given a context $\kappa$, the conditional probability ${P_\Kmc(c\mid\kappa)}$ is defined via the
rule ${P_\Kmc(c,\kappa)=P_\Kmc(c\mid\kappa)P_\Bmc(\kappa)}$, where 
\[{P_\Kmc(c,\kappa)=\sum_{\Osf_\Wmc\models c,\Wmc\models_p\kappa}P_\Bmc(\Wmc).}\]
%
For simplicity, in the rest of this paper we consider only unconditional consequences. However, it should be 
noted that all results can be transferred to the conditional case.
%
%
%
%

\begin{example}
Consider again the KB $\Kmc_1=(\Bmc_1,\Osf_1)$ from Example~\ref{exa:run} and the consequence $\cn{Comp}\sqsubseteq\cn{FailComp}$.
We are interested in finding the probability of the computer to fail, \ie $P_{\Kmc_1}(\cn{Comp}\sqsubseteq\cn{FailComp})$. This can be computed by enumerating all worlds $\Wmc$ for which $\Osf_\Wmc \models \cn{Comp}\sqsubseteq\cn{FailComp}$, which yields the probability 0.238.
\end{example}

As seen, it is possible to extend any ontological language to allow for probabilistic reasoning based 
on a Bayesian network. We now further extend this formalism to be able to cope with controlled updates
of the probabilities over time.  

\section{Dynamic Bayesian Ontology Languages}

With \bl, one is able to represent and reason about the uncertainty of the current context, and the consequences
that follow from it. In that setting, the joint probability distribution of the contexts, expressed by the BN,
is known and fixed. In some applications, see especially~\cite{SaKa'10} this probability distribution may change over time.
For example, as the components of a computer age, their probability of failing
increases. The new probability depends on how likely it was for the component to fail previously, and 
the ageing factors to which it is exposed. We now extend \bl to handle these cases, by considering
\emph{dynamic} BNs as the underlying formalism for managing uncertainty over contexts.

Dynamic BNs (DBNs)~\cite{DeKa-CI'89,Mu'02} extend BNs to provide a compact representation of 
evolving joint probability distributions for a fixed set of random variables. The update of the JPD is 
expressed through a two-slice BN, which expresses the probabilities at the next point in time, given
the current context. 

\begin{definition}[DBN]
Let $V$ be a finite set of Boolean random variables. A \emph{two-slice BN} (TBN) over $V$ is a pair
$(G,\Phi)$, where $G=(V\cup V',E)$ is a DAG containing no edges
 between elements of $V$, 
$V'=\{x'\mid x\in V\}$, and $\Phi$ contains, for every
$x'\in V'$ a conditional probability distribution $P(x'\mid\pi(x'))$ of $x'$ given its parents $\pi(x')$.
A \emph{dynamic Bayesian network} (DBN) over $V$  is a pair $\Dmc=(\Bmc_1,\Bmc_{\rightarrow})$ where 
$\Bmc_1$ is a BN over $V$, and $\Bmc_{\rightarrow}$ is a TBN over $V$.
\end{definition}

A TBN over $V=\{x,y,z\}$ is depicted in Figure~\ref{fig:tbn}.
\begin{figure}[t]
\centering
\includegraphics[width=\columnwidth]{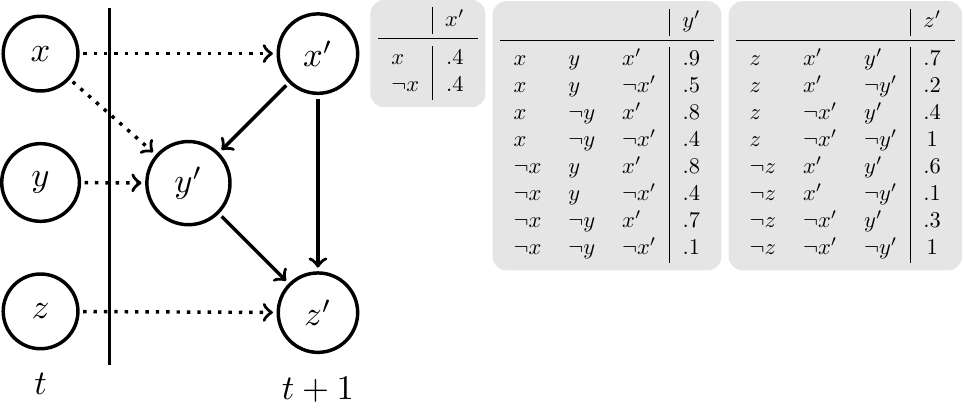}
\caption{ The TBN $\Bmc_\rightarrow$ over the variables $V=\{x,y,z\}$}
\label{fig:tbn}
\end{figure}
The set of nodes of the graph can be 
thought of as containing two disjoint copies of the random variables in $V$.
Then, the probability distribution at time $t+1$ depends on the distribution at time $t$. 
In the following we will use $V_t$ and $x_t$ for $x\in V$, to denote the variables in $V$ at time $t$.

As standard in BNs, the graph structure of a TBN encodes the conditional dependencies among the nodes: 
every node is independent of all its non-descendants given its parents. 
Thus, for a TBN \Bmc, 
the conditional probability distribution at time $t+1$ given time $t$ is
\[
P_{\Bmc}(V_{t+1}\mid V_{t})= \prod_{x'\in V'} P_{\Bmc}(x'\mid \pi(x')).
\]
We further assume the Markov property: the probability of the future state is independent from the
past, given the present state.

In addition to the TBN, a DBN contains a BN $\Bmc_1$ that encodes the JPD of $V$ at the beginning of the
evolution. Thus, the DBN $\Dmc=(\Bmc_1,\Bmc_\rightarrow)$ defines, for every $t\ge 1$, the unique 
probability distribution
\[
P_{\Bmc}(V_{t})= P_{\Bmc_1}(V_1) \prod_{i=2}^t\prod_{x\in V} P_{\Bmc_\rightarrow}(x_i\mid\pi(x_i)).
\]
%
Intuitively, the distribution at time $t$ is defined by unraveling the DBN starting from $\Bmc_1$, using the
two-slice structure of $\Bmc_\rightarrow$ until $t$ copies of $V$ have been created. This produces 
a new BN $\Bmc_{1:t}$ encoding the distribution over time of the different variables. Figure~\ref{fig:unrav}
depicts the unraveling to $t=3$ of the DBN $(\Bmc_1,\Bmc_\rightarrow)$ where
$\Bmc_1$ and $\Bmc_\rightarrow$ are the networks depicted in Figures~\ref{fig:bn} and~\ref{fig:tbn}, 
respectively.
\begin{figure}[t]
\centering
\includegraphics[width=\columnwidth]{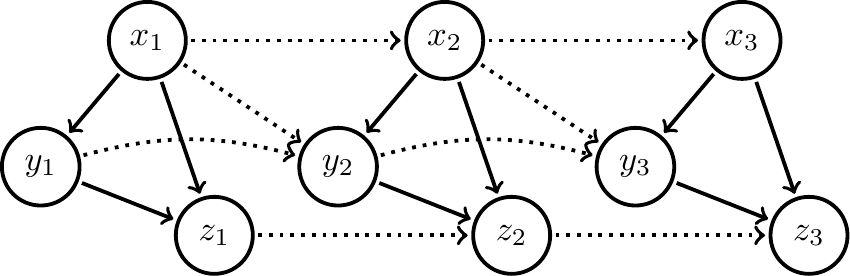}
\caption{Three step unraveling $\Bmc_{1:3}$ of $(\Bmc_1,\Bmc_\rightarrow)$}
\label{fig:unrav}
\end{figure}
The conditional probability tables of each node given its parents (not depicted) are those of $\Bmc_1$
for the nodes in $V_1$, and of $\Bmc_\rightarrow$ for nodes in $V_2\cup V_3$.
Notice that $\Bmc_{1:t}$ has $t$ copies of each random variable in $V$. 
For a given $t\ge 1$, we call $\Bmc_t$ the BN obtained from the unraveling $\Bmc_{1:t}$ 
of the DBN to time $t$, and eliminating all variables not in $V_t$. In particular, we have that
$P_{\Bmc_t}(V)=P_{\Bmc_{1:t}}(V_t)$.

The dynamic Bayesian ontology language \dbl is very similar to \bl, except that the probability distribution
of the contexts evolves accordingly to a DBN.

\begin{definition}[\dbl KB]
A \dbl knowledge base (KB) is a pair ${\Kmc=(\Dmc,\Osf)}$ where 
${\Dmc=(\Bmc_1,\Bmc_{\rightarrow})}$ is a DBN over $V$ and \Osf is a $V$-ontology.
Let ${\Kmc=(\Dmc,\Osf)}$ be a \dbl KB  over $V$.
A \emph{timed probabilistic interpretation} is an infinite sequence 
$\Pmf=(\Pmc_t)_{t\ge 1}$ of probabilistic interpretations.
\Pmf is a \emph{model} of $\Kmc$ if for every $t$, 
$\Pmc_t$ is a model of the \bl KB $(\Bmc_t,\Osf)$.
\end{definition}
In a nutshell, a DBN can be thought of as a compact representation of an infinite sequence of
BNs $\Bmc_1,\Bmc_2,\ldots$ over $V$. 
Following this idea, a \dbl KB expresses an infinite sequence of \bl KBs, where
the ontological component remains unchanged, and only the probability distribution of the contexts
evolves over time.
A timed probabilistic interpretation \Pmf simply interprets each of these \bl KBs, at the corresponding point in
time. To be a model of a \dbl KB, \Pmf must then be a model of all the associated \bl KBs.

Before describing the reasoning tasks for \dbl and methods for solving them, we show how the 
computation of all the contexts that entail a consequence can be reduced to the enumeration of the 
worlds satisfying a propositional formula.

\section{Compiling Contextual Knowledge}

From Proposition~\ref{prop:basic}, we see that reasoning in \bl can be reduced to checking, for every
world \Wmc, whether $\Osf_\Wmc\models c$. This reduces probabilistic reasoning to a sequence of standard
entailment tests over the original language \Lmc.
However, each of these entailments might be very expensive. For example,
in the very expressive DL $\mathcal{SHOIQ}$, deciding an entailment is already
\NExpTime-hard~\cite{To'00}.
Rather than repeating this reasoning step for every world, it makes sense to try
to identify the relevant worlds \emph{a priori}.
We do this through the computation of a \emph{context formula}.

\begin{definition}[context formula]
Let \Osf be a $V$-ontology, and $c\in\Cmf$. A \emph{context formula} for $c$ w.r.t.\ \Osf is a propositional
formula $\phi$ such that for every interpretation \Wmc of the variables in $V$, it holds that
$\Osf_\Wmc \models c$ iff $\Wmc \models_p \phi$.
\end{definition}

The idea behind this formula is that, for finding whether $\Osf_\Wmc\models c$, it suffices to check
whether the valuation \Wmc satisfies $\phi$. This test requires only linear time on the length of 
the context formula.
The context formula can be seen as a generalization of the pinpointing formula~\cite{BaPe-JLC'10}
and the boundary~\cite{BaKP-JWS'12}, defined originally for classical ontology languages.

\begin{example}
Consider again the $V$-ontology $\Osf_1$ from Example~\ref{exa:run}. 
The formula $\phi_1:=(x\land (y\lor z))\lor(\neg x\land y\land z)$ is a context formula for 
$\cn{Comp}\sqsubseteq\cn{FailComp}$ w.r.t.\ $\Osf_1$.
In fact, the valuation $\{x,\neg y, z\}$ satisfies this formula.
\end{example}

Clearly, computing the context formula must be at least as hard as deciding an entailment in \Lmc: if
we label every axiom in a classical \Lmc-ontology \Omc with the same propositional variable $x$, then
the boundary formula of $c$ w.r.t.\ this $\{x\}$\mbox{-}ontology is $x$ iff $\Omc\models c$.
On the other hand, the algorithm used for deciding the entailment relation can usually be modified
to compute the context formula. Using arguments similar to those developed for axiom 
pinpointing~\cite{BaPe-JAR'09,KPHS-ISWC'07}, it can be shown that for most description logics,
computing the context formula is not harder, in terms of computational complexity, than standard
reasoning. 
In particular this holds for any arbitrary ontology language whose entailment relation is \ExpTime-hard.
This formula can also be compiled into a more efficient data structure like binary decision 
diagrams~\cite{Lee'59}.
Intuitively, this means that we can compute this formula using the same amount of resources needed for
only one entailment test, and then use it for verifying whether the sub-ontology defined by a world
entails the consequence in an efficient way.

\section{Reasoning in \dbl}

Rather than merely computing the probability of currently observing a consequence,
we are interested in computing the probability of a consequence to follow after some fixed number of time steps $t$.
\begin{definition}[probabilistic entailment with time]
\label{def:probent}
Let ${\Kmc=(\Dmc,\Osf)}$ be a \dbl KB and $c\in\Cmf$. 
Given a timed interpretation \Pmf and $t\ge 1$, 
the \emph{probability of $c$  at time $t$} \wrt \Pmf is
${P_{\Pmf}(c[t]):=P_{\Pmc_t}(c)}$.
The \emph{probability of $c$ at time $t$ w.r.t.\ \Kmc} is 
$P_\Kmc(c[t]):=\inf_{\Pmf\models\Kmc}P_{\Pmf}(c[t]).$
\end{definition}
%


We show that probabilistic entailment over a fixed time bound can be reduced to probabilistic entailment defined for BOLs by unravelling the DBN.
\begin{lemma}
\label{lem:red}
Let $\Kmc=(\Dmc,\Osf)$ be a \dbl KB, $c\in\Cmf$, and $t\geq 1$. Then the probability of $c$ at time $t$ \wrt \Kmc is given by
\[P_\Kmc(c[t])=\sum_{{\Osf_\Wmc\models c}}P_{\Bmc_t}(\Wmc)\]
\end{lemma}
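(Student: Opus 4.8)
The plan is to reduce the claim about \dbl entirely to the already-established \bl machinery, using the fact that a \dbl KB is, at any fixed time $t$, nothing more than a \bl KB whose Bayesian network is the unraveled-and-marginalized network $\Bmc_t$. Concretely, I would first recall the definitions: by Definition~\ref{def:probent}, $P_\Kmc(c[t])=\inf_{\Pmf\models\Kmc}P_\Pmf(c[t])=\inf_{\Pmf\models\Kmc}P_{\Pmc_t}(c)$, where $\Pmf=(\Pmc_s)_{s\ge 1}$ ranges over all models of the \dbl KB. The goal is to show this infimum equals $\sum_{\Osf_\Wmc\models c}P_{\Bmc_t}(\Wmc)$, which by Proposition~\ref{prop:basic} is exactly $P_{(\Bmc_t,\Osf)}(c)$, the \bl probability of $c$ with respect to the \bl KB $(\Bmc_t,\Osf)$.

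Given this, the whole argument hinges on the identity
\[
\inf_{\Pmf\models\Kmc}P_{\Pmc_t}(c)=\inf_{\Pmc\models(\Bmc_t,\Osf)}P_{\Pmc}(c).
\]
First I would prove the ``$\ge$'' direction: if $\Pmf=(\Pmc_s)_{s\ge 1}$ is a model of the \dbl KB $\Kmc$, then by definition of a \dbl model, its $t$-th component $\Pmc_t$ is a model of the \bl KB $(\Bmc_t,\Osf)$; hence $P_{\Pmc_t}(c)$ is one of the values over which the right-hand infimum is taken, so the left-hand infimum is at least the right-hand one. The ``$\le$'' direction is the converse projection: I would take an arbitrary \bl model $\Pmc$ of $(\Bmc_t,\Osf)$ and extend it to a full timed interpretation $\Pmf$ that is a \dbl model of $\Kmc$ and whose $t$-th slice is exactly $\Pmc$. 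This shows every value attained on the right is also attained on the left, giving the reverse inequality; combining the two yields equality, and then Proposition~\ref{prop:basic} finishes the proof.

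The main obstacle is this extension step in the ``$\le$'' direction, namely showing that for every $s\neq t$ there exists \emph{some} probabilistic interpretation $\Pmc_s$ that is a model of the corresponding \bl KB $(\Bmc_s,\Osf)$, so that the combined sequence $\Pmf$ is a legitimate \dbl model with the prescribed $t$-th slice. Here I would invoke the fact that every \bl KB $(\Bmc_s,\Osf)$ admits at least one model: for each world $\Wmc$ pick any classical interpretation $\Imc_\Wmc$ that is a model of $\Osf_\Wmc$ (such a model exists because the ontology language has models, and entailment is defined over models), set $\Vmc^{\Isf_\Wmc}=\Wmc$, and assign probability $P_{\Bmc_s}(\Wmc)$ to each resulting $V$-interpretation $(\Imc_\Wmc,\Wmc)$. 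By construction this $\Pmc_s$ is consistent with $\Bmc_s$ and every interpretation in it is a model of $\Osf$, so it is a \bl model. Stitching these slices together for all $s\neq t$ around the chosen $\Pmc_t=\Pmc$ produces the required \dbl model. A minor care point is making sure the definition of a timed model imposes no coupling \emph{across} time slices beyond each $\Pmc_s$ being a model of $(\Bmc_s,\Osf)$ — the Definition of \dbl KB confirms this, since it only constrains each $\Pmc_t$ individually — so the slices can indeed be chosen independently and the decoupling is sound.
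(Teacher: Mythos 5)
Your proof is correct and follows essentially the same route as the paper's: the paper's (sketched) proof likewise reduces $P_\Kmc(c[t])$ to the \bl probability $P_{\Kmc_t}(c)$ for $\Kmc_t=(\Bmc_t,\Osf)$ and then invokes Proposition~\ref{prop:basic}. The only difference is that you explicitly justify the key identity $\inf_{\Pmf\models\Kmc}P_{\Pmc_t}(c)=\inf_{\Pmc_t\models\Kmc_t}P_{\Pmc_t}(c)$ --- in particular the extension of an arbitrary \bl model of $\Kmc_t$ to a full timed model of \Kmc, which relies on each slice being constrained independently --- a step the paper dismisses as holding ``by definition.''
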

\begin{proof}(Sketch)
A timed model \Pmf of \Kmc is a sequence of probabilistic interpretations
$\Pmc_1,\Pmc_2,\ldots$, where each $\Pmc_i$ is a model of the \bl KB 
$\Kmc_i:=(\Bmc_i,\Osf)$. We use this fact to show that
\begin{align}
P_\Kmc(c[t]) &=\inf_{\Pmf\models\Kmc}P_{\Pmf}(c[t])= \inf_{\Pmf\models\Kmc} P_{\Pmc_t}(c) \label{lten:one}\\ 
&=\inf_{\Pmc_t\models\Kmc_t} P_{\Pmc_t}(c) = P_{\Kmc_t}(c)\label{lten:two}\\
&=  \sum_{{\Osf_\Wmc\models c}}P_{\Bmc_t}(\Wmc), \label{lten:three}
\end{align}
where \eqref{lten:one} follows from Definition~\ref{def:probent}, 
\eqref{lten:two} holds by definition, 
and \eqref{lten:three} follows from Proposition~\ref{prop:basic}.
\end{proof}
Lemma~\ref{lem:red} provides a method for computing the probability of an entailment at a fixed time $t$. 
One can first generate the BN $\Bmc_t$, and then compute the probability w.r.t.\ $\Bmc_t$ of all the 
worlds that entail $c$. 
Moreover, using a context formula we can compile away the ontology and reduce reasoning to standard inferences in BNs, only.

\begin{theorem}
Let $\Kmc=(\Dmc,\Osf)$ be a \dbl KB, $c\in\Cmf$, $\phi$ a context formula for $c$ \wrt $\Osf$, 
and $t\ge 1$. Then the probability of $c$ at time $t$ \wrt \Kmc is given by ${P_\Kmc(c[t])=P_{\Bmc_t}(\phi)}$.
\end{theorem}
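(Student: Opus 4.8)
The plan is to chain together Lemma~\ref{lem:red} with the defining property of the context formula $\phi$, reducing the theorem to a purely probabilistic identity over the single Bayesian network $\Bmc_t$. By Lemma~\ref{lem:red}, we already know that $P_\Kmc(c[t])=\sum_{\Osf_\Wmc\models c}P_{\Bmc_t}(\Wmc)$, where the sum ranges over all worlds \Wmc on $V$ whose induced ontology $\Osf_\Wmc$ entails $c$. The only work remaining is to show that this sum equals $P_{\Bmc_t}(\phi)$, the probability that $\phi$ holds under the distribution defined by $\Bmc_t$.

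First I would recall the definition of the context formula: for every world \Wmc, we have $\Osf_\Wmc\models c$ if and only if $\Wmc\models_p\phi$. This equivalence means that the set of worlds appearing in the sum from Lemma~\ref{lem:red}, namely $\{\Wmc\mid\Osf_\Wmc\models c\}$, is \emph{exactly} the set of worlds satisfying $\phi$, namely $\{\Wmc\mid\Wmc\models_p\phi\}$. Therefore I would rewrite the summation index accordingly, replacing the condition $\Osf_\Wmc\models c$ by the propositional condition $\Wmc\models_p\phi$ without changing the value of the sum.

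Second, I would invoke the standard semantics of a Bayesian network to interpret the resulting sum $\sum_{\Wmc\models_p\phi}P_{\Bmc_t}(\Wmc)$. Since $\Bmc_t$ defines a joint probability distribution $P_{\Bmc_t}$ over the variables $V$, the probability it assigns to a propositional formula $\phi$ is by definition the sum of the probabilities of all worlds satisfying that formula; that is, $P_{\Bmc_t}(\phi)=\sum_{\Wmc\models_p\phi}P_{\Bmc_t}(\Wmc)$. Combining this with the rewritten sum yields the desired equality $P_\Kmc(c[t])=P_{\Bmc_t}(\phi)$, completing the argument.

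I do not expect any serious obstacle here, since the theorem is essentially a clean composition of two previously established facts: Lemma~\ref{lem:red} handles the reduction from the dynamic setting to the unravelled network $\Bmc_t$, and the context formula definition handles the elimination of the entailment tests in favour of a propositional check. The one point deserving a word of care is making sure that $P_{\Bmc_t}$ is genuinely a well-defined distribution over $V$ so that $P_{\Bmc_t}(\phi)$ is meaningful; this is guaranteed by the construction of $\Bmc_t$ as the marginal of the unravelling $\Bmc_{1:t}$ onto $V_t$, for which $P_{\Bmc_t}(V)=P_{\Bmc_{1:t}}(V_t)$ holds, as noted earlier in the excerpt. With that in place, the theorem follows immediately.
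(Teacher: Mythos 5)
Your argument is correct and is exactly the paper's proof: apply Lemma~\ref{lem:red}, use the defining equivalence $\Osf_\Wmc\models c$ iff $\Wmc\models_p\phi$ to reindex the sum, and identify the result with $P_{\Bmc_t}(\phi)$. The extra remark about $P_{\Bmc_t}$ being well defined via $P_{\Bmc_t}(V)=P_{\Bmc_{1:t}}(V_t)$ is a reasonable, if unnecessary, precaution that the paper leaves implicit.
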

\begin{proof}
By Lemma~\ref{lem:red} and the definition of a context formula, we have
\[
P_\Kmc(c[t])=\sum_{{\Osf_\Wmc\models c}}P_{\Bmc_t}(\Wmc)
=\sum_{{\Wmc\models_p \phi}}P_{\Bmc_t}(\Wmc) = P_{\Bmc_t}(\phi),
\]
which proves the result.
\end{proof}
This means that one can first compute a context formula for $c$ and then do probabilistic inferences 
over the DBN to detect the probability of satisfying $\phi$ at time $t$. For this, we can exploit any existing DBN 
inference method. One option is to do variable elimination over the $t$-step unraveled DBN $\Bmc_{1:t}$,
to compute $\Bmc_t$.
Assuming that $t$ is fixed, it suffices to make $2^{|V|}$ inferences (one for each world) over $\Bmc_t$ and
the same number of propositional entailment tests over the context formula.
If entailment in \Lmc is already exponential, then computing the probability of $c$ at time $t$ is as hard
as deciding entailments.

The previous argument only works assuming a fixed time point $t$. 
Since it depends heavily on computing $\Bmc_t$ (e.g., via variable elimination), it does not scale
well as $t$ increases.
Other methods have been proposed for exploiting the recursive structure of the DBN.
For instance, one can use the algorithm described in  \cite{VlMeBr-StarAI'14} that provides linear scalability
over time.
The main idea is to compile the structure into an arithmetic circuit~\cite{Darw-09} and use
forward and backward message passing~\cite{Mu'02}.
%

While computing the probability of a consequence at a fixed point in time $t$ is a relevant task,
it is usually more important to know whether the consequence can be observed within a given time 
limit. In our computer example, we would be interested in finding the probability of the system failing 
within, say, the following twenty steps.

Abusing of the notation, we use the expression $\no c$, $c\in\Cmf$, to denote that the consequence
$c$ does not hold; i.e., $\Imc\models\no c$ iff $\Imc\not\models c$. Thus, for example,
$\Pmc_\Kmc(\no c[t])$ is the probability of $c$ not holding at time $t$.
To find the probability of observing $c$ in the first $t$ time steps, one can alternatively compute the probability
of not observing $c$ in any of those steps. Formally, for a timed interpretation \Pmf and $t\in\mathbb{N}$, we define
$$P_\Pmf(c[1:t]):=1-P_\Pmf(\no c[1],\ldots,\no c[t]).$$
%
%
\begin{definition}[time bounded probabilistic entailment]
The \emph{probability of observing $c$ in at most $t$ steps} w.r.t.\ the \dbl KB \Kmc is 
$P_\Kmc(c[1:t]):=\inf_{\Pmf\models\Kmc}P_\Pmf(c[1:t])$.
\end{definition}

Just as before, given a constant $t\ge 1$, it is possible to compute $P_\Kmc(c[1:t])$ by looking at
the $t$-step unraveling of \Dmc. More precisely, to compute $P_\Pmf(\no c[1],\ldots,\no c[t])$, it suffices
to look at all the valuations \Wmc of $\bigcup_{i=1}^t V_i$ such that for all $i,1\le i\le t$, it holds that
$\Osf_{\Wmc(i)}\models\no c$.
These valuations correspond to an evolution of the system where the consequence $c$ is not observed in 
the first $t$ steps. The probability of these valuations w.r.t.\ $\Bmc_{1:t}$ then yields the probability of
\emph{not} observing this consequence. We thus get the following result.

\begin{theorem}
\label{thm:timebound}
Let $\Kmc=(\Dmc,\Osf)$ be a \dbl KB, $c\in\Cmf$, $\phi$ a context formula for $c$ \wrt $\Osf$, and 
$t\ge 1$. Then
${P_\Kmc(c[1:t])=\sum_{\Wmc\mid \exists i.\Wmc(i)\models_p \phi}P_{\Bmc_{1:t}}(\Wmc)}.$
\end{theorem}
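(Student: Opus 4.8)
The plan is to mirror the structure of the fixed-time theorem, but now over the full unraveling $\Bmc_{1:t}$ rather than the marginalized network $\Bmc_t$. First I would unfold the definition $P_\Pmf(c[1:t]) = 1 - P_\Pmf(\no c[1],\ldots,\no c[t])$ and take the infimum over timed models $\Pmf \models \Kmc$. Since taking $\inf$ of $1 - (\cdot)$ corresponds to taking $\sup$ of the subtracted quantity, the crux is to characterize $\sup_{\Pmf\models\Kmc} P_\Pmf(\no c[1],\ldots,\no c[t])$, i.e.\ the largest probability mass that a model can place on trajectories avoiding $c$ at every step $1,\ldots,t$. A timed model assigns, consistently with $\Bmc_{1:t}$, probabilities to joint $V$-interpretations across the slices $V_1,\ldots,V_t$; the event ``$\no c$ holds at every step'' is determined, per trajectory, by whether the world $\Wmc(i)$ induced at each slice $i$ satisfies $\Osf_{\Wmc(i)}\models\no c$.

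Next I would invoke the context formula to convert these entailment conditions into propositional ones: by definition of $\phi$, we have $\Osf_{\Wmc(i)}\models c$ iff $\Wmc(i)\models_p\phi$, hence $\Osf_{\Wmc(i)}\models\no c$ iff $\Wmc(i)\not\models_p\phi$. The trajectory $\Wmc$ over $\bigcup_{i=1}^t V_i$ therefore avoids $c$ throughout iff $\Wmc(i)\not\models_p\phi$ for all $i$, i.e.\ iff it is \emph{not} the case that $\exists i.\Wmc(i)\models_p\phi$. This gives the key identity
\[
P_\Pmf(\no c[1],\ldots,\no c[t]) = \sum_{\Wmc\mid \forall i.\Wmc(i)\not\models_p\phi} P_{\Bmc_{1:t}}(\Wmc),
\]
where the sum ranges over the valuations of $\bigcup_{i=1}^t V_i$ that the unraveling $\Bmc_{1:t}$ weights. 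Taking complements inside $1 - (\cdot)$ then yields exactly the stated sum over $\{\Wmc\mid\exists i.\Wmc(i)\models_p\phi\}$.

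The step I expect to be the main obstacle is justifying that the infimum over all timed models $\Pmf$ is attained by the distribution $P_{\Bmc_{1:t}}$ dictated by the unraveling — i.e.\ that the joint trajectory distribution is pinned down, not merely constrained marginal-by-marginal. In Lemma~\ref{lem:red} the reduction worked one slice at a time because $P_\Kmc(c[t])$ depends only on $\Pmc_t$, and the $\inf$ decoupled per time point. Here the event $\no c[1],\ldots,\no c[t]$ is genuinely joint across slices, so I must argue that consistency of $\Pmf$ with the DBN forces the joint law over $V_1,\ldots,V_t$ to agree with $P_{\Bmc_{1:t}}$ (via the Markov property and the two-slice structure of $\Bmc_\rightarrow$), and that the worst-case assignment of logical models within each world can be handled uniformly — the standard \bl argument underlying Proposition~\ref{prop:basic}, that a consistent probabilistic interpretation can concentrate each world's mass on a model of $\Osf_\Wmc$ exactly when one exists, extended here to entire trajectories. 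Once the joint distribution is established to be $P_{\Bmc_{1:t}}$, the remaining manipulation is the routine complementation above, and the result follows.
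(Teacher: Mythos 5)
Your proposal follows essentially the same route as the paper's own (sketched) proof: both reduce the entailment conditions $\Osf_{\Wmc(i)}\models c$ to the propositional tests $\Wmc(i)\models_p\phi$ via the context formula, and both discharge the infimum by an extremal (``pithy'') construction that concentrates each trajectory's mass on interpretations avoiding $c$ wherever the corresponding world's ontology does not entail it --- exactly the extension of Proposition~\ref{prop:basic} to trajectories that you correctly flag as the main remaining obstacle. The only cosmetic difference is that you work through the complement $1-\sup_{\Pmf\models\Kmc} P_\Pmf(\no c[1],\ldots,\no c[t])$, whereas the paper directly exhibits the interpretation $\Pmf_0$ attaining the stated sum and argues that no model can achieve a smaller value.
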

\begin{proof}[Proof (Sketch)]
Using the pithy interpretations of the crisp ontologies $\Omc_{\Wmc(i)}$, we can build a timed interpretation
$\Pmf_0$  such that $P_{\Pmf_0}(c[1:t])=\sum_{\Wmc\mid \exists i.\Wmc(i)\models_p \phi}P_{\Bmc_{1:t}}(\Wmc)$,
in a way similar to Theorem~12 of \cite{CePe-IJCAR'14}.
The existence of another timed interpretation $\Pmf$ such that $P_{\Pmf}(c[1:t])<P_{\Pmf_0}(c[1:t])$
contradicts the properties of the pithy interpretations. Thus, we obtain that
${P_{\Pmf_0}(c[1:t])=\inf_{\Pmf\models\Kmc}P_\Pmf(c[1:t])=P_\Kmc(c[1:t]).}$
\end{proof}

This means that the probability of observing a consequence within a fixed time-bound $t$ can be computed
by simply computing the context formula
and then performing probabilistic
\emph{a posteriori} computations over the unraveled BN. 
In our running example, the probability of observing a computer failure in the next 20 steps is simply
\[
{P_\Kmc(\cn{Comp}\sqsubseteq\cn{FailComp}[1:20])=
	\sum_{\Wmc\mid \exists i.\Wmc(i)\models_p \phi}P_{\Bmc_{i}}(\phi)}.
\]
Thus, the 
computational complexity of reasoning is not affected by introducing the dynamic evolution of the
BN, as long as the time bound is constant. Notice, however, that the number of possible valuations grows 
exponentially on the time bound $t$. Thus,
for large intervals, this approach becomes unfeasible.



By extending the time limit indefinitely, we can also find the probability of \emph{eventually} observing
the consequence $c$ (e.g., the probability of the system ever failing). 
The \emph{probability of eventually observing $c$} w.r.t.\ \Kmc is given by
${P_\Kmc(c[\infty]):=\lim_{t\to\infty}P_\Kmc(c[1:t])}$.
Notice that $P_\Kmc(c[1:t])$ is monotonic on $t$ and bounded by $1$; hence $P_\Kmc(c[\infty])$ is well defined.

Observe that Theorem~\ref{thm:timebound} cannot be used
to compute the probability of \emph{eventually} observing $c$ since one cannot necessarily predict the 
changes in probabilities of finding worlds that entail the consequence $c$. 
Rather than considering these increasingly large BNs separately, we can exploit methods developed  for probability distributions that evolve over time.
This will also allow us to extract more information from \dbl KBs.
%

It is easy to see that every TBN defines a time-homogeneous Markov chain over a finite state space.
More precisely, if \Bmc is a TBN over $V$, then $\Mmc_\Bmc$ is the Markov chain,
where every valuation \Wmc of the variables in $V$ is a state and the transition probability distribution
given the current state $\Wmc$ is described by the BN obtained from adding \Wmc as evidence to the first
slice of \Bmc. 
For example, the TBN
$\Bmc_\rightarrow$ from Figure~\ref{fig:tbn} yields the conditional probability
distribution given that $\{x,y,z\}$ was observed at time $t$ depicted in Figure~\ref{fig:Mcond}.
\begin{figure}[t]
\centering
\includegraphics[width=\columnwidth]{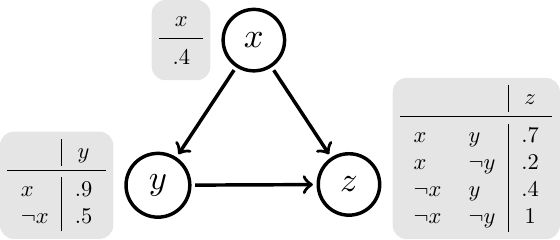}
\caption{$P(V\mid \{x,y,z\}_t)$}
\label{fig:Mcond}
\end{figure}
From this, we can derive the probability of observing $\{x,y,z\}$ at time $t+1$ given that it was observed at
time $t$, which is
$P(\{x,y,z\}_{t+1}\mid \{x,y,z\}_t)=0.252$.

We extend the notions from Markov chains to TBNs in the obvious way. In particular, the TBN \Bmc
is \emph{irreducible} if for every two worlds $\Vmc,\Wmc$, the probability of eventually reaching
$\Wmc$ given $\Vmc$ is greater than 0. It is \emph{aperiodic} if for every world \Wmc there is an
$n_\Wmc$ such that for all $n\ge n_\Wmc$, it holds that $P(\Wmc_n\mid \Wmc)>0$. 
A distribution $P_W$ over the worlds is \emph{stationary} if 
$\sum_{\Wmc}P(\Vmc\mid\Wmc)P_W(\Wmc)=P_W(\Vmc)$ holds for every world \Vmc.
It follows immediately that if \Bmc is irreducible and aperiodic, then it has a unique stationary
distribution~\cite{Harris1956}.

Given a TBN \Bmc over $V$, let now $\Delta_\Bmc$ be the set of all stationary distributions over the
worlds of $V$. 
For a world \Wmc, define $\delta_\Bmc(\Wmc):=\min_{P\in\Delta_\Bmc}P(\Wmc)$ to be the smallest 
probability assigned by any stationary distribution of \Bmc to \Wmc.
If $\delta_\Bmc(\Wmc)>0$, then we know that, regardless of the initial distribution, in the limit 
we will always be able to observe the world \Wmc with a constant positive probability.
In particular, this means that the probability of eventually observing \Wmc equals 1.
Notice moreover that this results is independent of the initial distribution used.

We can take this idea one step forward, and consider sets of worlds. For a propositional formula
$\phi$, let 
$$\delta_\Bmc(\phi):=\min_{P\in\Delta_\Bmc}\sum_{\Wmc\models_p \phi}P(\Wmc).$$
In other words, $\delta_\Bmc(\phi)$ expresses the minimum probability of satisfying $\phi$ in any 
stationary distribution of \Bmc. From the arguments above, we obtain the following theorem.

\begin{theorem}
Let $\Kmc=(\Dmc,\Osf)$ be a \dbl KB over $V$ with $\Dmc=(\Bmc_1,\Bmc_\rightarrow)$, $c\in\Cmf$, and
$\phi$ a context formula for $c$ w.r.t.\ \Osf. If $\delta_{\Bmc_\rightarrow}(\phi)>0$, then
$P_\Kmc(c[\infty])=1$.
\end{theorem}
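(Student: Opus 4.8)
The plan is to translate the statement into a hitting-time question for the time-homogeneous Markov chain $\Mmc_{\Bmc_\rightarrow}$ and then apply the recurrence theory of finite Markov chains. First I would reduce $P_\Kmc(c[\infty])$ to a hitting probability. By definition $P_\Kmc(c[\infty])=\lim_{t\to\infty}P_\Kmc(c[1:t])$, and Theorem~\ref{thm:timebound} gives $P_\Kmc(c[1:t])=\sum_{\Wmc\mid\exists i.\Wmc(i)\models_p\phi}P_{\Bmc_{1:t}}(\Wmc)$. Writing $S_\phi:=\{\Wmc\mid\Wmc\models_p\phi\}$ for the set of worlds satisfying the context formula, and recalling that $P_{\Bmc_{1:t}}$ is exactly the law of the first $t$ states of $\Mmc_{\Bmc_\rightarrow}$ with initial distribution $\Bmc_1$, the displayed sum is precisely the probability that this chain visits $S_\phi$ at some time $i\le t$. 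Since these events are nested and increasing in $t$, continuity of measure from below yields $P_\Kmc(c[\infty])=P(\text{the chain ever visits }S_\phi)$. It therefore suffices to prove that this hitting probability equals $1$.

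Next I would recast the hypothesis $\delta_{\Bmc_\rightarrow}(\phi)>0$ in structural terms. The set $\Delta_{\Bmc_\rightarrow}$ of stationary distributions of a finite chain is a convex polytope whose extreme points are the stationary distributions supported on single recurrent communicating classes $R$, each charging every state of $R$ with strictly positive probability. Since $P\mapsto\sum_{\Wmc\models_p\phi}P(\Wmc)$ is linear, its minimum $\delta_{\Bmc_\rightarrow}(\phi)$ over the polytope is attained at some extreme point $\pi_R$, so $\delta_{\Bmc_\rightarrow}(\phi)=\min_R\pi_R(S_\phi)$. As $\pi_R$ assigns positive mass to every state of $R$, we have $\pi_R(S_\phi)>0$ iff $R\cap S_\phi\neq\emptyset$. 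Hence $\delta_{\Bmc_\rightarrow}(\phi)>0$ is equivalent to the purely combinatorial condition that every recurrent class of $\Mmc_{\Bmc_\rightarrow}$ contains at least one world satisfying $\phi$.

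Finally I would invoke standard finite Markov chain theory. From any initial distribution the chain is absorbed into some recurrent class almost surely, and once inside a recurrent class it visits every one of its states infinitely often almost surely; here recurrence alone suffices, so no aperiodicity assumption on $\Bmc_\rightarrow$ is needed. By the previous step the entered class meets $S_\phi$, so with probability $1$ the chain visits $S_\phi$ (in fact infinitely often), independently of $\Bmc_1$. Thus the hitting probability is $1$, and combining with the first step gives $P_\Kmc(c[\infty])=1$.

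The main obstacle I anticipate is the second step: carefully justifying that the global condition $\delta_{\Bmc_\rightarrow}(\phi)>0$, which quantifies over \emph{all} stationary distributions, is equivalent to the combinatorial statement that $S_\phi$ meets every recurrent class. This rests on the decomposition of the stationary polytope into its ergodic components and on the linearity of the objective, and it is exactly the bridge that lets the qualitative recurrence argument of the last step apply regardless of the initial distribution $\Bmc_1$.
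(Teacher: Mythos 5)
Your proof is correct, and it is substantially more than what the paper itself provides: the paper states this theorem with no proof environment at all, deriving it ``from the arguments above,'' which amount to the informal claim that $\delta_{\Bmc_\rightarrow}(\phi)>0$ forces the limiting probability of the satisfying worlds to stay bounded away from zero, whence eventual observation is certain. Your argument supplies exactly the two bridges that this sketch leaves implicit. First, the reduction of $P_\Kmc(c[\infty])$ to the hitting probability of $S_\phi=\{\Wmc\mid\Wmc\models_p\phi\}$ for the chain $\Mmc_{\Bmc_\rightarrow}$ started from $P_{\Bmc_1}$, via Theorem~\ref{thm:timebound} and continuity from below, is the right formalization of the limit $\lim_{t\to\infty}P_\Kmc(c[1:t])$. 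Second, and more importantly, the equivalence of $\delta_{\Bmc_\rightarrow}(\phi)>0$ with the combinatorial condition that every recurrent class of $\Mmc_{\Bmc_\rightarrow}$ meets $S_\phi$ --- obtained from the ergodic decomposition of the stationary polytope plus linearity of $P\mapsto\sum_{\Wmc\models_p\phi}P(\Wmc)$ --- is the key lemma that makes the conclusion independent of the initial distribution $\Bmc_1$; the paper's prose asserts this independence without justification. From there the standard facts that a finite chain is absorbed into a recurrent class almost surely and then visits all of its states infinitely often finish the job. Your observation that only recurrence, not aperiodicity, is needed is also accurate: the paper invokes aperiodicity only to discuss uniqueness of the stationary distribution, and the theorem's hypothesis does not require it. The one point worth making explicit if you write this up is the identification of $P_{\Bmc_{1:t}}$ with the law of the first $t$ steps of $\Mmc_{\Bmc_\rightarrow}$, which follows from the product form of the unraveled distribution and the definition of the transition kernel as $P_{\Bmc_\rightarrow}(V'\mid V=\Wmc)$; this is routine but is the hinge on which the whole translation to Markov-chain language turns.
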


In particular, if $\Bmc_\rightarrow$ is irreducible and aperiodic, $\Delta_\Bmc$ contains only one 
stationary distribution, which simplifies the computation of the function $\delta$. Unfortunately,
such a simple characterization of $P_\Kmc(c[\infty])$ cannot be given when $\delta_{\Bmc_\rightarrow}(\phi)=0$.
In fact, in this case the result may depend strongly on the initial distribution.

\begin{example}
Let $V=\{x\}$, $\Osf_2=\{\ax{A\sqsubseteq B:\{x\}}\}$, and 
consider the TBN $\Bmc'_\rightarrow$ over $V$ defined by $P(x'\mid x)=1$ and $P(x'\mid \neg x)=0$.
It is easy to see that any distribution over the valuations of $V$ is stationary. 
For every initial distribution $\Bmc$, if $\Kmc=(\Dmc,\Osf_2)$ where $\Dmc=(\Bmc,\Bmc'_\rightarrow)$,
then $P_\Kmc(A\sqsubseteq B[\infty])=P_\Bmc(x)$.
\end{example}

So far, our reasoning services have focused on predicting the outcome at future time steps, given the
current knowledge of the system. Based on our model of evolving probabilities, the distribution at any time
$t+1$ depends only on time $t$, if it is known. 
However, for many applications it makes sense to consider evidence that is observed throughout
several time steps. For instance, in our computer failure scenario, the DBN $\Bmc_\rightarrow$ 
ensures that, if at some point a critical situation is observed ($x$ is true), then the probability of observing 
a memory or CPU failure in the next step is higher. That is, the evolution of the probability distribution
is affected by the observed value of the variable $x$.

Suppose that we have observed over the first $t$ time steps that no critical situation has occurred, and
we want to know the probability of a computer failure. 
Formally, let $E$ be a consistent set of literals over $\bigcup_{i=1}^t V_i$. We want to compute 
the probability $P_\Kmc(c[t]\mid E)$ of observing $c$ at time $t$ given the evidence $E$.
This is just a special case of bounded probabilistic entailment, where the worlds are not only restricted \wrt the context formula but also \wrt the evidence $E$.


The efficiency of this approach depends strongly on the time bound $t$, but also on the structure of the
TBN $\Bmc_\rightarrow$. Recall that the complexity of reasoning in a BN depends on the tree-width of
its underlying DAG~\cite{PaML'98}. 
The unraveling of $\Bmc_\rightarrow$ produces a new DAG whose tree-width might increase with each
unraveling step, thus impacting the reasoning methods negatively.

\section{Conclusions}
We have introduced a general approach for extending ontology languages to handle 
time-evolving probabilities with the help of a DBN. 
Our framework can be instantiated to any language with a monotonic entailment relation including,
but not limited to, all the members of the description logic family of knowledge representation
formalisms.

Our approach extends on ideas originally introduced for static probabilistic reasoning. The essence
of the method is to divide an ontology into different contexts, which are identified by a consistent set
of propositional variables from a previously chosen finite set of variables $V$.
The probabilistic knowledge is expressed through a probability distribution over the valuations of $V$ 
which is encoded by a DBN. 

Interestingly, our formalism allows for reasoning methods that exploit the properties of both, the
ontological, and the probabilistic components. From the ontological point of view, we can use
suplemental reasoning to produce a context formula that encodes all the possible worlds from which
a wanted consequence can be derived. We can then use standard DBN methods to compute the 
probability of satisfying this formula.

This work represents first steps towards the development of a formalism combining well-known
ontology languages with time-evolving probabilities. First of all, we have introduced only the most
fundamental reasoning tasks. It is possible to think of many other problems like finding the most
plausible explanation for an observed event, or computing the expected time until a consequence is 
derived, among many others.

Finally, the current methods developed for handling DBNs, although effective, are not adequate
for our problems. To find out the probability of satisfying the context formula $\phi$, we need 
compute the probability of each of the valuations that satisfy $\phi$ at different points in time.
Even using methods that exploit the structure of the DBN directly, the information of the context 
formula is not considered. Additionally, with the most efficient methods to-date, it is
unclear how to handle the evidence over time effectively.
Dealing with these, and other related problems, will be the main focus of our future work.

\section{Acknowledgments}
\.Ismail \.Ilkan Ceylan is supported by DFG within the Research Training Group ``RoSI'' (GRK 1907).
Rafael Pe\~naloza was partially supported by DFG through the Cluster of Excellence `cfAED,' while he was still 
affiliated with TU Dresden and the Center for Advancing Electronics Dresden, Germany.

\bibliographystyle{named}
\bibliography{library}

\end{document}